\newtheorem{theorem}{Theorem}
\newtheorem{corollary}[theorem]{Corollary}
\newtheorem{proposition}[theorem]{Proposition}
\newtheorem{definition}{Definition}
\newtheorem{example}{Example}
\newcommand{\BibTeX}{B\kern-.05em{\sc i\kern-.025em b}\kern-.08em\TeX}
\newcommand{\cs}{\bot}
\newcommand{\ps}{+}
\newcommand{\ms}{-}
\newcommand{\ns}{n}
\begin{document}


\begin{frontmatter}


\paperid{123} 


\title{An Argumentative Approach for Explaining Preemption in Soft-Constraint Based Norms} 




\author[A,B]{\fnms{Wachara} \snm{Fungwacharakorn}\orcid{0000-0001-9294-3118}\thanks{Corresponding Author. Email: wacharaf@nii.ac.jp.}}
\author[A]{\fnms{Kanae} \snm{Tsushima}\orcid{0000-0002-3383-3389}}
\author[C]{\fnms{Hiroshi} \snm{Hosobe}\orcid{0000-0002-7975-052X
}}
\author[A,B]{\fnms{Hideaki} \snm{Takeda}\orcid{0000-0002-2909-7163}} 
\author[A]{\fnms{Ken} \snm{Satoh}\orcid{0000-0002-9309-4602}}

\address[A]{Center for Juris-Informatics, Research Organization of Information and Systems, Tokyo, Japan}
\address[B]{National Institute of Informatics, Tokyo, Japan}
\address[C]{Faculty of Computer and Information Sciences, Hosei University, Tokyo, Japan}


\begin{abstract} Although various aspects of soft-constraint based norms have been explored, it is still challenging to understand preemption. Preemption is a situation where higher-level norms override lower-level norms when new information emerges. To address this, we propose a derivation state argumentation framework (DSA-framework). DSA-framework incorporates derivation states to explain how preemption arises based on evolving situational knowledge. Based on DSA-framework, we present an argumentative approach for explaining preemption. We formally prove that, under local optimality, DSA-framework can provide explanations why one consequence is obligatory or forbidden by soft-constraint based norms represented as logical constraint hierarchies. 


\end{abstract}

\end{frontmatter}


\section{Introduction}

In complex situations, norms can conflict, leading to challenges in normative systems. To address this, several studies have interpreted norms as \emph{soft constraints}~\cite{dennis2016formal,greene2016embedding,hosobe2024soft,satoh2021overview}. Unlike \emph{hard constraints}, which must be exactly satisfied, soft constraints are allowed to be relaxed, enabling agents to prioritize norms based on the context. This paper focuses on constraint hierarchies~\cite{borning1992constraint}, a pioneering formalism for dealing with soft constraints. Prior work has investigated various aspects of maintaining norms represented as constraint hierarchies, including debugging norms based on user expectation~\cite{fungwacharakorn2022debugging}, revising norms based on new information~\cite{fungwacharakorn2022fundamental}, and exploring connections with case-based reasoning~\cite{fungwacharakorn2023connecting}. However, a key challenge lies in understanding preemption. Preemption refers to a situation where higher-level norms override lower-level norms as more information becomes available. If preemption is not well-understood, it can undermine trust in the normative system. Imagine a situation where an agent expects a certain consequence to be obligatory, but it is ultimately forbidden due to preemption. Without explanation, the consequence becomes unexpected and this can erode trust in the system. Therefore, explaining preemption is critical for building trust in the system and allowing agents to understand the rationale of the normative system for handling norms and preferences.

To address this, we present a novel argumentative approach for explaining preemption. Argumentative approaches are widely used for explanations in various reasoning domains~\cite{cyras2016abstract,racharak2021explanation}. Based on abstract argumentation framework~\cite{dung1995acceptability}, most approaches explored their own methods to build arguments, such as arguments built from precedent cases~\cite{cyras2016abstract} or defeasible rules~\cite{prakken2015formalization}. 
This paper proposes a derivation state argumentation framework (DSA-framework), with arguments built from derivation states and situational knowledge to understand preemption. Based on DSA-framework, we can provide explanations using dispute trees \cite{dung2007computing}. We formally prove that, if one consequence is obligatory or forbidden under local optimality, we can always find an explanation why it is.

This paper is structured as follows. Section \ref{sec:preliminaries} describes norm representation and some logical structures used in this paper. Section \ref{sec:framework} proposes DSA-framework. Section \ref{sec:explanation} presents preemption explanations based on the proposed framework. Section \ref{sec:discussion} discusses limitations of the framework and suggestions for future work. Finally, Section \ref{sec:conclusion} concludes this paper.

\section{Preliminaries}
\label{sec:preliminaries}

In this paper, we consider representing norms as logical constraints. Let $\mathcal{L}$ be a classical logical language generated from a set of propositional constants in a standard way. We write $\neg$
for negation, $\rightarrow$ for implication, $\leftrightarrow$ for equivalence, $\top$ for a tautology, $\bot$ for a contradiction, and $\vdash$ for a classical deductive monotonic consequence relation. A constraint hierarchy is typically represented as $H = \langle H_1,\ldots,H_l \rangle$, where $l$ is some positive integer, and each $H_i 
\subseteq \mathcal{L}$, called a \emph{level}, is a finite subset of logical constraints. In original definitions of constraint hierarchies \cite{borning1992constraint}, there exists a level $H_0$ consisting of \emph{required} (or \emph{hard}) constraints that must be exactly satisfied.  However, in this paper, we consider the level of hard constraints as a background theory $T_0$ to simplify other definitions. Each $H_i$ consists of \emph{preferential} (or \emph{soft}) constraints that can be relaxed if necessary. A constraint hierarchy is totally ordered, which means that a preferential level $H_i$ with smaller $i$ consists of more important constraints. 

Given a constraint hierarchy $H = \langle H_1,\ldots,H_l \rangle$ and a background theory $T_0$, we also treat $H$ as the whole set of logical constraints, that is $H = \bigcup_{i \in \{1,...,l\}} H_i$. With a general assumption that $T_0$ is consistent (i.e. $T_0 \not\vdash \bot$), we say $H$ is consistent if and only if $T_0 \cup H \not\vdash \bot$. For example, given that $T_0$ is empty, $\langle\{p\},\{q\}\rangle$ is consistent but $\langle\{p\},\{\neg p, q\}\rangle$ is not. For $\Phi \subseteq \mathcal{L}$, we also say $H$ is consistent with $\Phi$ if and only if $T_0 \cup H \cup \Phi \not\vdash \bot$.

Applying the concepts of sub-bases \cite{benferhat1993argumentative} and maximal consistent sets \cite{satoh1993computing} to constraint hierarchies, we say a constraint hierarchy $H' = \langle H'_1,\ldots,H'_l\rangle$ is a \emph{sub-base} of $H = \langle  H_1,\ldots,H_l \rangle$ if and only if $H'$ must have the same number of levels as $H$ and $H'_i \subseteq H_i$ for every $i \in \{1,...,l\}$. For example, $\langle\{p\},\{q\}\rangle$ is a sub-base of $\langle\{p\},\{\neg p, q\}\rangle$. Let $H$ be a constraint hierarchy, a \emph{sub-base space} of $H$ is a pair $(\Delta, \geq)$ where $\Delta$ is the set of all possible sub-bases of $H$ and $\geq$ is a partial order over $\Delta$, representing the preference of soft constraint relaxations. The strict order $>$ associated with $\geq$ is defined as $\delta > \delta'$ if and only if $\delta \geq \delta'$ and it is not the case that $\delta' \geq \delta$ (for $\delta$ and $\delta' \in \Delta$). The maximal element of $\geq$ is $H$ itself and the minimal element is the constraint hierarchy with the same number of levels as $H$ but all of them are empty. Corresponding to a local comparator in constraint hierarchies \cite{borning1992constraint}, we say $\geq$
is a local preference when $\delta \geq \delta'$ (for $\delta$ and $\delta' \in \Delta$) if and only if there exists $k \in \{1,\ldots,l\}$ such that $\delta'_k \subsetneq \delta_k$ and for every $i \in \{1,\ldots,l\}$ $i < k$ implies  $\delta'_i = \delta_i$. Given a sub-base space $(\Delta, \geq)$, we use the following notations:
\begin{itemize}
    \item $\Delta^\Phi$ (for $\Phi \subseteq \mathcal{L}$):  the set of all sub-bases in $\Delta$ consistent with $\Phi$,
    \item $max(D)$ (for $D \subseteq \Delta$): the set of all $\geq$-maximal elements of $D$.  
\end{itemize}

In our setting, we represent a situation as a consistent set of formulas $\Pi \subseteq \mathcal{L}$, and we represent a consequence as a formula $\psi \in \mathcal{L}$ such that $T_0 \cup \Pi \not \vdash \psi$ and $T_0 \cup \Pi \not \vdash \neg \psi$. Now, we define the concept of obligation, adapted from \cite{kowalski2018obligation}, as follows.

\begin{definition}[obligation]
\label{def:obligation}
Let $T_0$ be a background theory, $H$ be a constraint hierarchy corresponding with sub-base space $(\Delta, \geq)$. We say a consequence $\psi$ is \emph{obligatory} (resp. \emph{forbidden}) by $H$ with a situation $\Pi$ if and only if, for every $\delta \in max(\Delta^\Pi)$ $T_0 \cup \delta \cup \Pi \vdash \psi$ (resp. $\neg \psi$).    
\end{definition}

\begin{example}[overtaking]
\label{ex:overtaking}
Considering the following norms regarding overtaking, prioritized from less important to more important.
\begin{enumerate}
    \item Generally, drivers should not overtake the other car. 
    \item If the other car appears obstructed, drivers should  overtake the other car. 
    \item If the other car is in a danger zone, drivers should not overtake the other car.
\end{enumerate}
\end{example}

Omitting the background theory in this example, the norms can be represented as the constraint hierarchy $H = \langle\{p \rightarrow \neg r\}, \{q \rightarrow r\}, \{\neg r\}\rangle$ where $p$ represents "the other car is in a danger zone", $q$ represents "the other car appears obstructed", and $r$ represents "drivers should overtake the other car". The constraints are placed in a different order as constraint hierarchies prioritize constraints from left to right. There are eight sub-bases of $H$, ranked by the local preference as follows.

\begin{enumerate}
    \item $\delta_0 = \langle\{p \rightarrow \neg r\}, \{q \rightarrow r\}, \{\neg r\}\rangle = H$
    \item $\delta_1 = \langle\{p \rightarrow \neg r\}, \{q \rightarrow r\}, \{\}\rangle$
    \item $\delta_2 = \langle\{p \rightarrow \neg r\}, \{\}, \{\neg r\}\rangle$
    \item $\delta_3 = \langle\{p \rightarrow \neg r\}, \{\}, \{\}\rangle$
    \item $\delta_4 = \langle\{\}, \{q \rightarrow r\}, \{\neg r\}\rangle$
    \item $\delta_5 = \langle\{\}, \{q \rightarrow r\}, \{\}\rangle$
    \item $\delta_6 = \langle\{\}, \{\}, \{\neg r\}\rangle$
    \item $\delta_7 = \langle\{\}, \{\}, \{\}\rangle$
\end{enumerate}

Suppose the situation is that another car appears obstructed and it is in a danger zone ($\Pi = \{p,q\}$). We have that $\Delta^\Pi = \{\delta_2,\ldots,\delta_7\}$ because $\delta_0$ and $\delta_1$ are not consistent with $\Pi$. We also have that $r$ ("drivers should overtake the other car") is forbidden because $\delta_2$ is the maximal element of $\Delta^\Pi$ and $\delta_2 \cup \Pi \vdash \neg r$.

\section{Proposed Framework}
\label{sec:framework}

To leverage an argumentation framework in the norm structure, we first define derivation states and derivation state spaces as follows.

\begin{definition}[derivation state]
\label{def:derivation-state}
Let $T_0$ be a background theory, $\delta \subseteq \mathcal{L}$, $\pi \subseteq \mathcal{L}$, $\psi \in \mathcal{L}$, and $\Sigma = \{\bot, \ps, \ms, \ns \}$ be the domain of derivation states. A derivation state ($\sigma$) of $\psi$ with respect to $\delta$ and $\pi$ is defined as follows.

\begin{enumerate}
    \item $\sigma = \cs$ if $\delta$ is not consistent with $\pi$.
    \item $\sigma = \ps$ if $T_0 \cup \delta \cup \pi \vdash \psi$ and $T_0 \cup \delta \cup \pi \not \vdash \neg \psi$.
    \item $\sigma = \ms$ if $T_0 \cup \delta \cup \pi \not \vdash \psi$ and $T_0 \cup \delta \cup \pi \vdash  \neg \psi$.
    \item $\sigma = \ns$ if $T_0 \cup \delta \cup \pi \not \vdash \psi$ and $T_0 \cup \delta \cup \pi \not \vdash  \neg \psi$. 
\end{enumerate}
    
\end{definition}

\begin{definition}[derivation state space]
Let $H$ be a constraint hierarchy corresponding with sub-base space $(\Delta, \geq)$, $\Pi$ be a situation, and $\psi$ be a consequence. A \emph{derivation state space} (denoted by $\Omega$) of $\psi$  with respect to $H$ and $\Pi$ is the set $\Omega = \{\langle \delta,\pi,\sigma \rangle \in \Delta \times 2^\Pi \times \Sigma~|~\sigma$ is a derivation state of $\psi$ with respect to $\delta$ and $\pi\}$.
\end{definition}

Now, we define a \emph{DS-argument} as follows.

\begin{definition}[DS-argument]
\label{def:ds-argument}
Let $H$ be a constraint hierarchy corresponding with sub-base space $(\Delta, \geq)$ and $\Omega$ be a derivation state space. A \emph{DS-argument} from $\Omega$ is an element $\langle \delta,\pi,\sigma \rangle$ of $\Omega$ that satisfies following conditions. 
\begin{enumerate}
    \item $\sigma \neq \cs$, that is $\delta$ needs to be consistent with $\pi$.
    \item There is no $\langle \delta',\pi,\sigma' \rangle \in \Omega$ ($\pi$ is fixed) such that  $\delta' > \delta$ and $\sigma' \neq \cs$.  In other words, $\delta$ is a maximal sub-base of $\Delta$ consistent with $\pi$, or formally speaking, $\delta \in max(\Delta^\pi)$.
\end{enumerate}

\noindent For a DS-argument $\langle \delta,\pi,\sigma \rangle$, we call $\delta$ a corresponding sub-base, we call $\pi$ a situational knowledge, and we call $\sigma$ a derivation state.

\end{definition}

Table \ref{tab:dev-state} shows the derivation state space of a consequence $r$ with respect to $H$ from Example \ref{ex:overtaking} and the situation $\Pi = \{p,q\}$ to find all DS-arguments. There are four DS-arguments from this setting: $\langle \delta_0,\{\},\ms \rangle$,  $\langle \delta_0,\{p\},\ms \rangle$, $\langle \delta_1,\{q\},\ps \rangle$, and $\langle \delta_2,\{p,q\},\ms \rangle$, corresponding to the derivation states denoted by asterisks ($^*$) in the table.

\begin{table}[ht]
\caption{Derivation state space in Example \ref{ex:overtaking}}
\label{tab:dev-state}
\begin{center}
\begin{tabular}{c|cccccccc}
    $\pi \backslash \delta$ &
    $\delta_0$& $\delta_1$ & $\delta_2$ & $\delta_3$ & $\delta_4$ & $\delta_5$ & $\delta_6$ & $\delta_7$ \\
    \hline 
    $\{\}$&$\ms^*$&$\ns$&$\ms$&$\ns$&$\ms$&$\ns$&$\ms$&$\ns$\\
    $\{p\}$&$\ms^*$&$\ms$&$\ms$&$\ms$&$\ms$&$\ns$&$\ms$&$\ns$\\
    $\{q\}$&$\cs$&$\ps^*$&$\ms$&$\ns$&$\cs$&$\ps$&$\ms$&$\ns$\\
    $\{p,q\}$&$\cs$&$\cs$&$\ms^*$&$\ms$&$\cs$&$\ps$&$\ms$&$\ns$\\
\end{tabular}
\end{center}
\end{table}

Inspiring from abstract argumentation for case-based reasoning (AA-CBR) \cite{cyras2016abstract}, this paper proposes a derivation state argumentation framework (DSA-framework) based on derivation states and incremental knowledge of the situation as follows.

\begin{definition}[DSA-framework]
  Let $H$ be a constraint hierarchy corresponding with sub-base space $(\Delta, \geq)$, $\Pi$ be a situation, and $\psi$ be a consequence with $\Omega$ as a derivation state space of $\psi$ with respect to $H$ and $\Pi$. A DSA-framework with respect to $H$, $\Pi$, and $\psi$ is $(AR,attacks)$ satisfying the following conditions.
  \begin{enumerate}
      \item $AR$ is the set of all DS-arguments from $\Omega$.
      \item For $\langle \delta,\pi,\sigma \rangle, \langle \delta',\pi',\sigma' \rangle \in AR$,  $\langle \delta,\pi,\sigma \rangle$ attacks $\langle \delta',\pi',\sigma' \rangle$ if and only if
      \begin{itemize}
          \item (change derivation state) $\sigma \neq \sigma'$, and 
          \item (gain more knowledge) $\pi' \subsetneq \pi$, and 
          \item (concise attack)  $\nexists\langle\delta'',\pi'',\sigma\rangle \in AR$ with $\pi' \subsetneq \pi'' \subsetneq \pi$. 
      \end{itemize}
  \end{enumerate}
\end{definition}

From Example \ref{ex:overtaking}, the DSA-framework with respect to $H$, the situation $\Pi = \{p,q\}$, and a consequence $r$ can be illustrated in Figure \ref{fig:dsa-framework}. The arrows represent attacks between arguments.

\begin{figure}[ht]
\begin{center}
\includegraphics[scale=0.5]{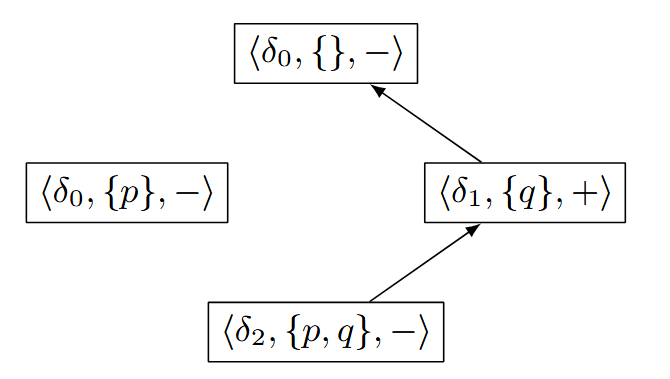}
\caption{DSA-framework from Example \ref{ex:overtaking}}
\label{fig:dsa-framework}
\end{center}
\end{figure}

Proposition \ref{prop:gen-prop} shows general properties of DSA-framework.

\begin{proposition}
\label{prop:gen-prop}
DSA-framework has the following properties.
\begin{enumerate}
    \item For every $\pi \subseteq \Pi$ (including $\{\}$ and $\Pi$), the DSA-framework has at least one argument with a situational knowledge $\pi$.
    \item DSA-framework is well-founded (i.e., acyclic).
    \item If $\psi$ is obligatory (resp. forbidden) by $H$ with a situation $\Pi$, every argument with the complete situational knowledge ($\Pi$) has a derivation state $\ps$ (resp. $\ms$).
    \item If an argument $\langle \delta, \pi, \sigma \rangle$ attacks $\langle \delta', \pi', \sigma' \rangle$, $\delta \not > \delta'$.
\end{enumerate}
\end{proposition}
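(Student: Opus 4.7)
The plan is to treat the four claims one by one, since each relies on a different structural feature of the DSA-framework. Throughout, I would exploit the fact that $H$ has finitely many sub-bases (so $\Delta$ is finite) and that $\Pi$ is a consistent finite set of formulas, so $2^\Pi$ is finite too.

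For property (1), I would first show that $\Delta^\pi$ is non-empty for every $\pi \subseteq \Pi$: the empty sub-base (all levels $\{\}$) is trivially consistent with any $\pi \subseteq \Pi$ since $T_0 \cup \Pi$ is consistent by assumption. Because $\Delta^\pi$ is a non-empty finite set partially ordered by $\geq$, $\max(\Delta^\pi)$ is non-empty. Picking any $\delta \in \max(\Delta^\pi)$ gives a consistent $\delta$, hence a derivation state $\sigma \in \{\ps,\ms,\ns\}$, and $\langle\delta,\pi,\sigma\rangle$ satisfies both conditions of Definition~\ref{def:ds-argument}.

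For property (2), I would note that any attack $\langle\delta,\pi,\sigma\rangle \to \langle\delta',\pi',\sigma'\rangle$ forces $\pi' \subsetneq \pi$ by the ``gain more knowledge'' clause. Hence the second coordinate strictly decreases along any attack chain, and since it lives in the finite poset $(2^\Pi,\subseteq)$, no cycle is possible and any chain terminates; this delivers acyclicity and, combined with finiteness of $AR$, well-foundedness.

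For property (3), I would unpack Definition~\ref{def:obligation}. Suppose $\psi$ is obligatory by $H$ with $\Pi$ and let $\langle\delta,\Pi,\sigma\rangle \in AR$ be any argument with the complete situational knowledge. By the DS-argument conditions $\delta \in \max(\Delta^\Pi)$, so $T_0 \cup \delta \cup \Pi \vdash \psi$. Because $\delta$ is consistent with $\Pi$ and $T_0$ is consistent, $T_0 \cup \delta \cup \Pi \not\vdash \bot$, so $T_0 \cup \delta \cup \Pi \not\vdash \neg\psi$; thus $\sigma = \ps$ by Definition~\ref{def:derivation-state}. The forbidden case is symmetric.

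For property (4), which I expect to be the main obstacle because it requires tracking maximality across two different consistency contexts, I would argue by contradiction. Assume $\langle\delta,\pi,\sigma\rangle$ attacks $\langle\delta',\pi',\sigma'\rangle$ with $\delta > \delta'$. Since $\delta$ is consistent with $\pi$ and $\pi' \subsetneq \pi$, monotonicity of $\vdash$ gives that $\delta$ is also consistent with $\pi'$, so $\delta \in \Delta^{\pi'}$. Then $\delta \in \Delta^{\pi'}$ together with $\delta > \delta'$ contradicts $\delta' \in \max(\Delta^{\pi'})$, which is required by $\langle\delta',\pi',\sigma'\rangle$ being a DS-argument. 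Hence $\delta \not> \delta'$.
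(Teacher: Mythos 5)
Your proof is correct and follows essentially the same route as the paper's: non-emptiness of $\max(\Delta^\pi)$ via the always-consistent empty sub-base for (1), strict decrease of the situational-knowledge coordinate for (2), unfolding Definition~\ref{def:obligation} for (3), and the monotonicity-of-consistency contradiction for (4). You spell out a few steps the paper leaves implicit (finiteness of the poset, and that $T_0\cup\delta\cup\Pi\vdash\psi$ plus consistency rules out $\sigma=\cs$ and $\sigma=\ms$ in (3)), but the underlying arguments are identical.
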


\begin{proof}
    ~
    \begin{enumerate}
        \item Property 1 holds because we assume that the background theory and the situation are consistent and there exists a minimal sub-base in the sub-base space, which is the empty constraint hierarchy.
        \item Property 2 holds because we derive attacks from subset relation, which is a partial order.
        \item Property 3 follows Definition \ref{def:obligation} and Definition \ref{def:ds-argument} since for every $\delta \in max(\Delta^\Pi)$, $T_0 \cup \delta \cup \Pi \vdash \psi$ so every $\langle\delta,\Pi,\sigma\rangle \in AR$,  $\sigma = \ps$. 
        The forbidden can be proved analogously.
        \item Suppose an argument $\langle\delta, \pi, \sigma\rangle$ attacks $\langle\delta', \pi', \sigma'\rangle$ and $\delta > \delta'$. Since $\delta$ is consistent with $\pi$ and $\pi' \subsetneq \pi$, $\delta$ is consistent with $\pi'$. Together with $\delta > \delta'$, we can conclude that $\delta' \notin max(\Delta^{\pi'})$, contradicting Definition \ref{def:ds-argument}.
    \end{enumerate}
\end{proof}

Next, we consider a specific condition, called \emph{local optimality}, defined as follows.

\begin{definition}[local optimality]
\label{def:pare-opt}
Let $T_0$ be a background theory. A constraint hierarchy $H$ corresponding with sub-base space $(\Delta, \geq)$ is \emph{locally optimized} for a consequence $\psi$ with respect to a situation $\Pi$ if and only if the following conditions hold. 

\begin{enumerate}
    \item $\geq$ is a local preference.
    \item Every maximal consistent subset of $H \cup \Pi$ is decisive with respect to $\psi$, i.e. if $S \subseteq H \cup \Pi$ is consistent and no other consistent $S' \subseteq H \cup \Pi$ such that $S \subsetneq S'$ then either $T_0 \cup S \vdash \psi$ or $T_0 \cup S \vdash \neg \psi$.
\end{enumerate}

If $\psi$ is obligatory (resp. forbidden) by $H$ with $\Pi$ and 
$H$ is locally optimized for $\psi$ with respect to $\Pi$, then we say $\psi$ is locally optimally obligatory (resp. forbidden).

\end{definition}

For example, the consequence $r$ in Example \ref{ex:overtaking} is locally optimally forbidden because we use the local preference and every maximal consistent subset is decisive with respect to $r$. Under local optimality, we can simplify arguments in DSA-framework based on derivation states as follows.



\begin{proposition}
\label{prop:spec-prop}
    If a consequence $\psi$ is locally optimally obligatory by a constraint hierarchy $H$ with a situation $\Pi$, a DSA-framework with respect to $H$, $\Pi$, and $\psi$ has the following properties. 
    \begin{enumerate}
        \item All arguments with derivation states ($\ns$) do not attack any arguments.
        \item All arguments with derivation states ($\ns$) are attacked by some arguments.
        \item All arguments with derivation states ($\ms$) are attacked by some arguments.
    \end{enumerate}
\end{proposition}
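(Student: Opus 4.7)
My plan is to prove the three clauses in turn, with clauses~2 and~3 sharing a single construction and clause~1 requiring a more delicate argument by contradiction. The standing inputs will be Proposition~\ref{prop:gen-prop}(1), which ensures at least one DS-argument at every $\pi \subseteq \Pi$, and Proposition~\ref{prop:gen-prop}(3), which, in the obligatory case, forces every DS-argument at $\pi = \Pi$ to carry state $\ps$.

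For clauses~2 and~3, let $\sigma \in \{\ns, \ms\}$ and fix any target $\langle \delta, \pi, \sigma \rangle \in AR$. By Proposition~\ref{prop:gen-prop}(3) we must have $\pi \subsetneq \Pi$. Consider the family
$\mathcal{F} = \{ \pi^{\dagger} : \pi \subsetneq \pi^{\dagger} \subseteq \Pi \text{ and some DS-argument at } \pi^{\dagger} \text{ has state } \neq \sigma \}$,
which is nonempty because $\Pi \in \mathcal{F}$. Pick any $\subseteq$-minimal $\pi^{*} \in \mathcal{F}$ and a witnessing argument $\langle \delta^{*}, \pi^{*}, \sigma^{*} \rangle$ with $\sigma^{*} \neq \sigma$. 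The change-of-state and gain-of-knowledge conditions hold immediately, and the concise-attack clause is exactly the minimality of $\pi^{*}$: any $\pi''$ with $\pi \subsetneq \pi'' \subsetneq \pi^{*}$ carrying an argument of state $\sigma^{*}$ would place $\pi''$ in $\mathcal{F}$, contradicting minimality. A single run of this template handles both clauses~2 and~3.

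Clause~1 is where I expect the main obstacle, and I would argue by contradiction. Suppose $\langle \delta, \pi, \ns \rangle$ attacks $\langle \delta', \pi', \sigma' \rangle$; then $\sigma' \in \{\ps, \ms\}$, $\pi' \subsetneq \pi$, and the concise-attack clause forbids any DS-argument of state $\ns$ at any $\pi''$ with $\pi' \subsetneq \pi'' \subsetneq \pi$. The goal is to exhibit such an $\ns$-argument at an intermediate $\pi''$. Monotonicity of $\vdash$ is the central leverage: since $T_0 \cup \delta \cup \pi$ derives neither $\psi$ nor $\neg \psi$, the same holds for $T_0 \cup \delta \cup \pi''$ for every $\pi'' \subseteq \pi$, and $\delta$ remains consistent with each such $\pi''$. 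If $\delta$ happens to be $\geq$-maximal in $\Delta^{\pi''}$, then $\langle \delta, \pi'', \ns \rangle$ is the DS-argument sought, immediately contradicting concise attack.

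The real difficulty is the case $\delta \notin max(\Delta^{\pi''})$: any $\delta^{\dagger} \in max(\Delta^{\pi''})$ strictly above $\delta$ in local preference must be inconsistent with $\pi$, for otherwise $\delta$ would fail to be maximal in $\Delta^{\pi}$. Here I plan to invoke local optimality — decisiveness on maximal consistent subsets of $H \cup \Pi$ together with the layered structure of local preference — either to exhibit another $\geq$-maximal sub-base at $\pi''$ whose derivation state remains $\ns$, or to propagate a decision from $\langle \delta^{\dagger}, \pi'', \sigma^{\dagger} \rangle$ back up to contradict the $\ns$ state of $\langle \delta, \pi, \ns \rangle$. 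The degenerate subcase $|\pi \setminus \pi'| = 1$, where concise attack is vacuous, must be treated separately by a direct comparison of $\delta$ and $\delta'$ via local preference and monotonicity. The recurring subtlety that $\delta^{\dagger} > \delta$ in local preference does not entail $\delta^{\dagger} \supseteq \delta$ as sets is, I expect, the principal technical obstacle.
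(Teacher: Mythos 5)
Your handling of clauses~2 and~3 is sound and is essentially the paper's own argument: take a $\subseteq$-minimal element $\pi^*$ of the family of strict supersets of $\pi$ carrying an argument of a different state (nonempty because every argument at $\Pi$ has state $\ps$ by Proposition~\ref{prop:gen-prop}), and let minimality deliver the concise-attack condition. The only difference is cosmetic: for clause~3 the paper restricts its family to $\ps$-witnesses, whereas yours also admits $\ns$-witnesses, which is harmless since you only need \emph{some} attacker to exist.

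Clause~1, however, is a genuine gap: you name the obstacle but do not resolve it, and the strategy you commit to --- contradicting the \emph{concise-attack} condition by exhibiting an $\ns$-argument at an intermediate $\pi''$ with $\pi'\subsetneq\pi''\subsetneq\pi$ --- cannot be completed as stated. When $\pi\setminus\pi'$ is a singleton there is no intermediate $\pi''$ at all, and in the remaining cases you leave the crucial step ($\delta\notin max(\Delta^{\pi''})$) as an ``either\ldots or\ldots'' plan. The paper closes clause~1 by contradicting a different condition of Definition~\ref{def:ds-argument}, namely the \emph{maximality of the attacker's own sub-base}, and this route never mentions intermediate situational knowledge. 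Suppose $\langle\delta,\pi,\ns\rangle$ attacks $\langle\delta',\pi',\sigma'\rangle$ with, say, $\sigma'=\ps$. Then $T_0\cup\delta'\cup\pi'\vdash\psi$ while $T_0\cup\delta\cup\pi$ derives neither $\psi$ nor $\neg\psi$ and $\pi'\subsetneq\pi$; so some constraint $c\in\delta'\setminus\delta$, sitting at some level $l$, is responsible for the derivation of $\psi$, and since $T_0\cup\delta\cup\pi$ decides neither $\psi$ nor $\neg\psi$, adding $c$ to level $l$ of $\delta$ yields a sub-base $\delta''$ still consistent with $\pi$. Under the local preference (the first half of local optimality), $\delta''$ agrees with $\delta$ on every level below $l$ and strictly extends level $l$, hence $\delta''>\delta$, contradicting $\delta\in max(\Delta^{\pi})$. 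This is the missing idea: the attacked argument's decided state, pulled back along monotonicity to the attacker's larger situational knowledge, shows the attacker's sub-base was never maximal --- so no case split on $|\pi\setminus\pi'|$ and no appeal to the decisiveness condition is needed.
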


\begin{proof}
    ~
    \begin{enumerate}
        \item Suppose an argument $\langle\delta^\ns,\pi^\ns,\ns\rangle$ attacks an argument $\langle\delta,\pi,\sigma\rangle$. We have that $\sigma \neq \ns$ and $\pi \subsetneq \pi^\ns$. There must be a soft constraint $c \in \delta \setminus \delta^\ns$ at level $l$ such that $T_0 \cup \{c\} \cup \pi \vdash \psi$ (if $\sigma = \ps$) or $T_0 \cup \{c\} \cup \pi \vdash \neg \psi$ (if $\sigma = \ms$). From Definition \ref{def:derivation-state}, we have that $T_0 \cup \delta^\ns \cup \pi^\ns \not \vdash \psi$ and $T_0 \cup \delta^\ns \cup \pi^\ns \not \vdash \neg \psi$.Therefore, adding $c$ into the level $l$ of $\delta^\ns$ gets $\delta'$, which is consistent with $\pi^\ns$ and $\delta' > \delta^\ns$ under the local preference, contradicting the fact that $\delta^\ns \in max(\Delta^{\pi^\ns})$ according to Definition \ref{def:ds-argument}.
        
        \item Suppose an argument $\langle\delta^\ns,\pi^\ns,\ns\rangle$ is unattacked, we have two cases:
        \begin{enumerate}[label=(\alph*)]
            \item $\forall\langle\delta^\sigma,\pi^\sigma,\sigma\rangle \in AR $ with $ \sigma \neq \ns~[~\pi^\ns \not\subset \pi^\sigma$ or $\pi^\ns = \pi^\sigma~]$: This case contradicts the fact that DSA-framework has such $\langle\delta,\Pi,\ps\rangle \in AR$ and no such $\langle\delta,\Pi,\ns\rangle \in AR$ according to Proposition \ref{prop:gen-prop}.
            
            \item $\forall\langle\delta^\sigma,\pi^\sigma,\sigma\rangle \in AR $ with $ \sigma \neq \ns~[~\exists\langle\delta',\pi',\sigma\rangle \in AR$ with $\pi^\ns \subsetneq$ $ \pi' \subsetneq \pi^\sigma$~]: This case contradicts the facts that $S = $ $\{\pi~|~\langle\delta^\sigma,\pi,\sigma\rangle \in AR$ and $ \sigma \neq \ns$ and $\pi^\ns \subsetneq \pi\}$ is not empty since $\langle\delta,\Pi,\ps\rangle \in S$ and $\pi^\ns \neq \Pi$, $S$ has a minimal element $\pi^\sigma$ with respect to the set inclusion, and no $\pi' \in S$ such that $\pi^\ns \subsetneq \pi' \subsetneq \pi^\sigma$.   
        \end{enumerate}

        \item Suppose an argument $\langle\delta^\ms,\pi^\ms,\ms\rangle$ is unattacked, we have two cases:
        \begin{enumerate}[label=(\alph*)] 

            \item $\forall\langle\delta^\ps,\pi^\ps,\ps\rangle \in AR~[~\pi^\ms \not\subset \pi^\ps$ or $\pi^\ms = \pi^\ps~]$: This case contradicts the fact that DSA-framework has such $\langle\delta,\Pi,\ps\rangle \in AR$ and no such $\langle\delta,\Pi,\ms\rangle \in AR$ according to Proposition \ref{prop:gen-prop}.
            
            \item $\forall\langle\delta^\ps,\pi^\ps,\ps\rangle \in AR~[~\exists\langle\delta',\pi',\ps\rangle \in AR$ with $\pi^\ms \subsetneq$ $ \pi' \subsetneq \pi^\ps$~]: This case contradicts the facts that $S = $ $\{\pi~|~\langle\delta^\ps,\pi,\ps\rangle \in AR$ and $\pi^\ms \subsetneq \pi\}$ is not empty since $\langle\delta,\Pi,\ps\rangle \in S$ and $\pi^\ms \neq \Pi$, $S$ has a minimal element $\pi^\ps$ with respect to the set inclusion, and no $\pi' \in S$ such that $\pi^\ms \subsetneq \pi' \subsetneq \pi^\ps$.   
        \end{enumerate}
    \end{enumerate}
    
\end{proof}

\begin{corollary}
\label{corol:3}
    If a consequence $\psi$ is locally optimally forbidden by a constraint hierarchy $H$ with a situation $\Pi$, a DSA-framework with respect to $H$, $\Pi$, and $\psi$ has the following properties.  
    \begin{enumerate}
        \item All arguments with derivation states ($\ns$) do not attack any arguments. 
        \item All arguments with derivation states ($\ns$) are attacked by some arguments
        \item All arguments with  derivation states ($\ps$) are attacked by some arguments.
    \end{enumerate}
\end{corollary}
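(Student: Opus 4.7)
The plan is to reduce Corollary \ref{corol:3} to Proposition \ref{prop:spec-prop} via a negation symmetry, rather than to repeat the case analysis with the roles of $\ps$ and $\ms$ swapped throughout. The first step is to observe that, by Definition \ref{def:obligation}, the statement ``$\psi$ is forbidden by $H$ with $\Pi$'' is literally the statement ``$\neg\psi$ is obligatory by $H$ with $\Pi$'', and that the decisiveness clause of Definition \ref{def:pare-opt} is symmetric in $\psi$ and $\neg\psi$, so local optimality for $\psi$ transfers to local optimality for $\neg\psi$. Hence the hypothesis of the corollary is exactly the hypothesis of Proposition \ref{prop:spec-prop} applied to the consequence $\neg\psi$.

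The second step is to compare the DSA-framework built for $\psi$ with the one built for $\neg\psi$. Using classical double negation, Definition \ref{def:derivation-state} shows that the permutation on $\Sigma$ which swaps $\ps$ and $\ms$ and fixes $\cs$ and $\ns$ carries the derivation state of $\psi$ at $(\delta,\pi)$ to that of $\neg\psi$ at $(\delta,\pi)$. Because the DS-argument conditions $\sigma \neq \cs$ and $\delta \in max(\Delta^\pi)$ do not depend on the consequence, the two frameworks share the same underlying arguments up to this relabeling. One then checks that each clause of the attack relation survives the swap: $\sigma \neq \sigma'$ is preserved by every bijection on $\Sigma$, $\pi' \subsetneq \pi$ is state-free, and the concise-attack clause refers to the absence of an intermediate argument carrying the attacker's state, which is invariant under a global, consistent relabeling of $\ps$ and $\ms$.

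The last step is to apply Proposition \ref{prop:spec-prop} to $\neg\psi$ and translate the three conclusions through the swap. Properties 1 and 2 there concern $\ns$-arguments, and $\ns$ is fixed by the swap, so they read unchanged as properties 1 and 2 of the corollary. Property 3 there states that every $\ms$-argument is attacked; under the swap, $\ms$ for $\neg\psi$ corresponds to $\ps$ for $\psi$, yielding property 3 of the corollary.

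The main obstacle I anticipate is the bookkeeping in the second step, specifically the verification that the concise-attack clause is invariant under the state swap, since it quantifies over arguments sharing the attacker's state rather than a fixed symbol and must therefore be re-read carefully after relabeling. If this reduction is judged too indirect for the paper, a fallback is to mirror the proof of Proposition \ref{prop:spec-prop} line by line with $\ps$ and $\ms$ interchanged and with the roles of the ``all-$\Pi$ argument is $\ps$ / no all-$\Pi$ argument is $\ms$'' facts flipped accordingly; this introduces no new mathematical content but makes the corollary self-contained.
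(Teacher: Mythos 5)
Your proposal is correct. The paper itself gives no written proof of Corollary~\ref{corol:3}: it is stated as an immediate consequence of Proposition~\ref{prop:spec-prop}, with the implicit argument being the one the paper already signals for Proposition~\ref{prop:gen-prop} (``the forbidden can be proved analogously''), i.e.\ re-run the case analysis with $\ps$ and $\ms$ interchanged. Your primary route is a more structured version of that same duality: you realize the symmetry as an actual reduction, observing that ``$\psi$ forbidden'' is by Definition~\ref{def:obligation} literally ``$\neg\psi$ obligatory,'' that local optimality is invariant under $\psi\mapsto\neg\psi$, and that the map $\psi\mapsto\neg\psi$ induces an isomorphism of DSA-frameworks under the relabeling swapping $\ps$ and $\ms$ (your checks of the DS-argument conditions and of all three attack clauses, including the concise-attack clause, are the right ones and go through). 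What your reduction buys is that the corollary is obtained by a single application of Proposition~\ref{prop:spec-prop} with no duplicated case analysis, and it makes the informal ``analogously'' rigorous; what the paper's implicit approach buys is brevity on the page. One small point you should make explicit: for the reduction to be well-posed, $\neg\psi$ must itself qualify as a consequence in the paper's sense, i.e.\ $T_0\cup\Pi\not\vdash\neg\psi$ and $T_0\cup\Pi\not\vdash\neg\neg\psi$; this is immediate from the corresponding conditions on $\psi$ and classical double negation, but it is part of the hypothesis of Proposition~\ref{prop:spec-prop} being available for $\neg\psi$ and deserves a sentence.
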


\section{Explaining Preemption}
\label{sec:explanation}

In this section, we focus on explaining preemption with DSA-framework. Since DSA-framework is a specific type of abstract argumentation framework, we provide explanations using dispute trees in the same manner of other abstract argumentation based systems \cite{cyras2016abstract, dung2006dialectic,dung2007computing}. Referring to the original abstract argumentation framework \cite{dung1995acceptability}, we use a term \emph{AA-framework}, denoted by a pair $(\mathcal{A}, \mathcal{R})$ where $\mathcal{A}$ is a
set whose elements are called \emph{arguments} and $\mathcal{R} \subseteq \mathcal{A} \times \mathcal{A}$. For $x,y \in \mathcal{A}$, we say $x$ attacks $y$ if $\langle x,y \rangle \in \mathcal{R}$. We follow the definitions of dispute trees in AA-CBR \cite{cyras2016abstract} as follows.

\begin{definition}[dispute tree] Let $(\mathcal{A}, \mathcal{R})$ be an AA-framework. A \emph{dispute tree} for an argument $x_0 \in \mathcal{A}$, is a (possibly infinite) tree $\mathcal{T}$ with the following conditions.
\begin{enumerate}
    \item Every node of $\mathcal{T}$ is of the form $[L:x]$, with $L \in \{P, O\}$ and $x \in \mathcal{A}$ where $L$ indicates the status of proponent ($P$) or opponent ($O$).
    
    \item The root of $\mathcal{T}$ is $[P:x_0]$.
    
    \item For every proponent node $[P:y]$ in $\mathcal{T}$ and for every $x \in \mathcal{A}$ such that $x$ attacks $y$, there exists $[O:x]$ as a child of $[P:y]$.
    
    \item For every opponent node $[O:y]$ in $\mathcal{T}$, there exists at most one child of $[P:x]$ such that $x$ attacks $y$.
    
    \item there are no other nodes in $\mathcal{T}$ except those given by 1-4.
    \end{enumerate}
A dispute tree $\mathcal{T}$
is an \emph{admissible} dispute tree if and only if 
\begin{enumerate*}[label=(\alph*)]
    \item every opponent node $[O:x]$ in $\mathcal{T}$ has a child, and
    \item no $[P:x]$ and $[O:y]$ in $\mathcal{T}$ such that $x=y$.
\end{enumerate*}
A dispute tree $\mathcal{T}$ is a \emph{maximal} dispute tree if and only if for all opponent nodes $[O:x]$ which are leaves in $\mathcal{T}$ there is no argument $y \in \mathcal{A}$ such that $y$ attacks $x$.
\end{definition}

As DSA-framework is an abstract argumentation based system, similar to AA-CBR \cite{cyras2016abstract}, we adapt the definitions from AA-CBR to provide novel explanations for why a consequence is obligatory or forbidden as follows.

\begin{definition}[explanation]
    Explanations for why a consequence $\psi$ is obligatory by a constraint hierarchy $H$ with a situation $\Pi$ are:
    \begin{itemize}
        \item any admissible dispute tree for every argument $\langle\delta, \{\}, \ps\rangle$ and for every argument $\langle\delta', \pi, \ps\rangle$ that attacks $\langle\delta'', \{\}, \ns\rangle$, and
        \item any maximal dispute tree for every argument $\langle\delta, \{\}, \ms\rangle$ and for every argument $\langle\delta', \pi, \ms\rangle$ that attacks $\langle\delta'', \{\}, \ns\rangle$.
    \end{itemize}

    \noindent Explanations for why a consequence $\psi$ is forbidden by a constraint hierarchy $H$ with a situation $\Pi$ are
    \begin{itemize}
        \item any admissible dispute tree for every argument $\langle\delta, \{\}, \ms\rangle$ and for every argument $\langle\delta', \pi, \ms\rangle$ that attacks $\langle\delta'', \{\}, \ns\rangle$, and
        \item any maximal dispute tree for every argument $\langle\delta, \{\}, \ps\rangle$ and for every argument $\langle\delta', \pi, \ps\rangle$ that attacks $\langle\delta'', \{\}, \ns\rangle$.
    \end{itemize}
\end{definition}

Figure \ref{fig:explanation-1} illustrates an explanation for why $r$ ("drivers should overtake the other car") is forbidden by $H$ in Example \ref{ex:overtaking} with the situation $\Pi = \{p,q\}$. It demonstrates the preemption through the constraint hierarchy and incremental knowledge of the situation. This explanation can be interpreted into the following dialogue.

\begin{figure}[ht]
\centering
    \includegraphics[scale=0.5]{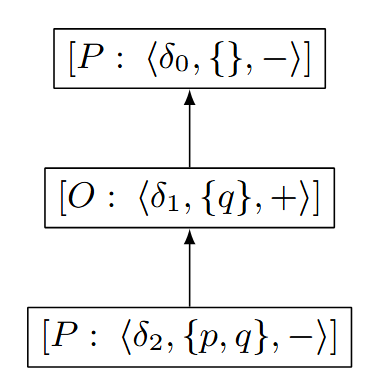}
\caption{Explanation for why $r$ is forbidden in Example \ref{ex:overtaking} with the situation $\Pi = \{p,q\}$\\~}
\label{fig:explanation-1}
\end{figure}

\begin{itemize}
    \item[] $P:$ Generally, drivers should not overtake the other car $\langle\delta_0,\{\},\ms\rangle$
    \item[] $O:$ But, the other car appears obstructed in this situation so drivers should overtake the other car $\langle\delta_1,\{q\},\ps\rangle$
    \item[] $P:$ But, the other car is in a danger zone in this situation so drivers still should not overtake the other car $\langle\delta_2,\{p,q\},\ms\rangle$
\end{itemize}

On the other hand, Figure \ref{fig:explanation-2} illustrates an explanation for why $r$ is obligatory by  $H$ in the same example but with the situation $\Pi = \{q\}$. This explanation is now a maximal dispute tree, unlike the previous explanation, which is an admissible dispute tree.

\begin{figure}[ht]
\centering
    \includegraphics[scale=0.5]{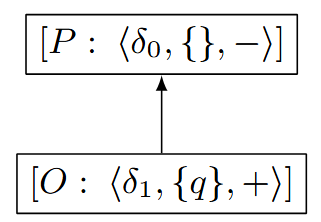}
\caption{Explanation for why $r$ is obligatory in Example \ref{ex:overtaking} with the situation $\Pi = \{q\}$\\~}
\label{fig:explanation-2}
\end{figure}

Proving existence of explanations is based on several notions from abstract argumentation framework so we recap them as follows.

\begin{definition}[from  \cite{dung1995acceptability}] 
Let $(\mathcal{A}, \mathcal{R})$ be an AA-framework, $E \subseteq \mathcal{A}$, and $x,y \in \mathcal{A}$.

\begin{enumerate}
    \item $E$ \emph{attacks} $x$ if some argument $z \in E$ attacks $x$.
    \item $E$ \emph{defends} $y$ if, for every $x \in \mathcal{A}$ that attack $y$, $E$ attacks $x$.
    \item $E$ is \emph{conflict-free} if no $x,y \in E$ such that $x$ attacks $y$.
    \item $E$ is \emph{admissible} if $E$ is conflict-free and $E$ defends every $z \in E$.
    \item $E$ is the \emph{grounded extension} of the AA-framework if it can be constructed inductively as $E = \bigcup_{i \geq 0} E_i$, where $E_0$ is the set of unattacked arguments, and $\forall i \geq 0$, $E_{i+1}$ is the set of arguments that $E_{i}$ defends.
    \item $E$ is a \emph{stable extension} of the AA-framework if it is a conflict-free set that attacks every argument that does not belong in $E$.
    \item $E$ is a \emph{preferred extension} of the AA-framework if it is a maximal admissible set with respect to the set inclusion.
    \item $E$ is a \emph{complete extension} of the AA-framework if it is an admissible set and every argument that $E$ defends, belongs to $E$.
\end{enumerate}
\end{definition}

Since DSA-framework is well-founded, the extension of the framework is unique, namely there is only one extension that is grounded, stable, preferred, and complete \cite{dung1995acceptability}.

\begin{proposition}
\label{prop:extension}
     If a consequence $\psi$ is locally optimally obligatory by a constraint hierarchy $H$ with a situation $\Pi$, an extension $E$ of DSA-framework $(AR, attacks)$ with respect to $H$, $\Pi$, and $\psi$ has the following properties. 
    \begin{enumerate}
        \item For every $\langle\delta,\{\},\ps\rangle \in AR$, $\langle\delta,\{\},\ps\rangle \in E$.
        \item For every $\langle\delta,\{\},\ms\rangle \in AR$, $\langle\delta,\{\},\ms\rangle \notin E$.
        \item For every $\langle\delta,\{\},\ns\rangle \in AR$, 
        \begin{itemize}
            \item if it is attacked by $\langle\delta^\ps,\pi^\ps,\ps\rangle \in AR$, $\langle\delta^\ps,\pi^\ps,\ps\rangle \in E$; and
            \item if it is attacked by $\langle\delta^\ms,\pi^\ms,\ms\rangle \in AR$, $\langle\delta^\ms,\pi^\ms,\ms\rangle \notin E$.
        \end{itemize}
    \end{enumerate}
\end{proposition}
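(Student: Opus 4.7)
The plan is to identify the unique extension $E$ explicitly and then read off the three properties. Because Proposition \ref{prop:gen-prop} shows that the DSA-framework is well-founded, the grounded, stable, preferred, and complete extensions coincide and are unique, as noted just before this proposition. It therefore suffices to exhibit a candidate set $E^*$ and verify that $E^*$ is stable; the claimed properties will then follow from $E=E^*$ by inspection of $E^*$.

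The candidate I would try is
\[
E^* = \{x \in AR : \sigma_x = \ps\} \cup \{x \in AR : \sigma_x = \ns \text{ and no } \ps\text{-argument in } AR \text{ attacks } x\}.
\]
That is, every $\ps$-argument is in $E^*$, no $\ms$-argument is in $E^*$, and an $\ns$-argument is included exactly when it is not overridden by a $\ps$-argument. The verification splits into two parts. For conflict-freeness, I would use that attacks require differing derivation states (ruling out $\ps$-vs-$\ps$), Proposition \ref{prop:spec-prop} part 1 (which says $\ns$-arguments attack nothing, ruling out attacks originating from the $\ns$-members of $E^*$), and the very definition of $E^*$ (which rules out $\ps$-attacks on the $\ns$-members of $E^*$). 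For stability, every argument outside $E^*$ needs an attacker in $E^*$: an $\ns$-argument excluded from $E^*$ has, by its exclusion criterion, a $\ps$-attacker in $AR$, which lies in $E^*$; an $\ms$-argument is attacked by Proposition \ref{prop:spec-prop} part 3, and its attacker cannot be $\ms$ (attacks need different $\sigma$) nor $\ns$ (Proposition \ref{prop:spec-prop} part 1), so it must be $\ps$ and therefore in $E^*$.

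Once $E=E^*$ is established, properties 1 and 2 are immediate (any $\langle\delta,\{\},\ps\rangle$ is in $E^*$ since it is a $\ps$-argument; any $\langle\delta,\{\},\ms\rangle$ is excluded since it is neither $\ps$ nor $\ns$), and property 3 is just the membership rule for $E^*$ restricted to attackers of $\ns$-arguments with empty situational knowledge. The step I expect to be the main obstacle is pinning down the correct candidate $E^*$: a naive guess such as $\{x : \sigma_x = \ps\}$ fails because some $\ns$-arguments happen to be unattacked by any $\ps$-argument and must therefore be included in $E$ for stability to hold. Once the right characterization is identified, each verification step reduces to a direct application of Proposition \ref{prop:spec-prop}.
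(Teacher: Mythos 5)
Your proposal is correct, but it proves the statement by a genuinely different route from the paper. The paper never exhibits the extension explicitly: for each of the three properties it argues by contradiction, using stability and conflict-freeness of $E$ to produce an alternating chain of attackers ($\ms$-argument in $E$, attacked by a $\ps$-argument outside $E$, attacked by an $\ms$-argument in $E$, \dots) and concluding ``inductively'' that $E$ would contain an unattacked $\ms$-argument, contradicting Proposition~\ref{prop:spec-prop}; termination of that descent implicitly rests on well-foundedness. You instead pin down the unique extension as $E^* = \{\ps\text{-arguments}\} \cup \{\ns\text{-arguments with no } \ps\text{-attacker}\}$ and verify stability directly (conflict-freeness from the $\sigma\neq\sigma'$ attack condition, part~1 of Proposition~\ref{prop:spec-prop}, and the definition of $E^*$; the attack-everything-outside condition from parts~1 and~3, noting that any attacker of an $\ms$-argument must be a $\ps$-argument). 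Your verification is complete as written --- stability implies admissibility, and uniqueness of the extension in a well-founded framework gives $E=E^*$ --- and all three claimed properties then follow by inspection. What your approach buys is a sharper result (a full characterization of $E$, not just the membership facts the proposition asserts) and the elimination of the paper's informal inductive-descent step; what the paper's approach buys is that it needs no candidate set and reuses the same contradiction template for all three items.
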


\begin{proof} 
    ~
\begin{enumerate}
    \item Suppose there is $\langle\delta,\{\},\ps\rangle \in AR\setminus E$, there must be some $\langle\delta',\pi',\ms\rangle \in E$ that attacks $\langle\delta,\{\},\ps\rangle$ because $E$ is a stable extension. According to Proposition \ref{prop:spec-prop}, $\langle\delta',\pi',\ms\rangle \in E$ must be attacked by $\langle\delta'',\pi'',\ps\rangle \in AR$. We have $\langle\delta'',\pi'',\ps\rangle \notin E$ since $E$ is conflict-free: In this case, there must be $\langle\delta''',\pi''',\ms\rangle \in E$ attacks $\langle\delta'',\pi'',\ps\rangle$ and inductively we have that there must be some $\langle\delta^*,\pi^*,\ms\rangle \in E$ that is unattacked, contradicting Proposition \ref{prop:spec-prop}.
    \item Suppose there is $\langle\delta,\{\},\ms\rangle \in E$. According to Proposition \ref{prop:spec-prop}, there must be some $\langle\delta',\pi',\ps\rangle \in AR \setminus E$ that attacks $\langle\delta,\{\},\ms\rangle$. Since $E$ is admissible, there must be $\langle\delta'',\pi'',\ms\rangle \in E$ attacks $\langle\delta',\pi',\ps\rangle$ and inductively we have that there must be some $\langle\delta^*,\pi^*,\ms\rangle \in E$ that is unattacked, contradicting Proposition \ref{prop:spec-prop}.
    \item For every $\langle\delta,\{\},\ns\rangle \in AR$, there are two cases according to Proposition \ref{prop:spec-prop}:
    \begin{enumerate}[label=(\alph*)]
        \item It is attacked by $\langle\delta^\ps,\pi^\ps,\ps\rangle \in AR$. Suppose $\langle\delta^\ps,\pi^\ps,\ps\rangle \notin E$, we can prove in the same manner as 1. that there must be some $\langle\delta^*,\pi^*,\ms\rangle \in E$ that is unattacked, contradicting Proposition \ref{prop:spec-prop}.
        \item It is attacked by $\langle\delta^\ms,\pi^\ms,\ms\rangle \in AR$. Suppose  $\langle\delta^\ms,\pi^\ms,\ms\rangle \in E$, we can prove in the same manner as 2. that there must be some $\langle\delta^*,\pi^*,\ms\rangle \in E$ that is unattacked, contradicting Proposition \ref{prop:spec-prop}.
    \end{enumerate}
\end{enumerate}
\end{proof}

\begin{corollary}
     If a consequence $\psi$ is locally optimally forbidden by a constraint hierarchy $H$ with a situation $\Pi$, an extension $E$ of DSA-framework $(AR, attacks)$ with respect to $H$, $\Pi$, and $\psi$ has the following properties. 
    \begin{enumerate}
        \item For every $\langle\delta,\{\},\ms\rangle \in AR$, $\langle\delta,\{\},\ms\rangle \in E$.
        \item For every $\langle\delta,\{\},\ps\rangle \in AR$, $\langle\delta,\{\},\ps\rangle \notin E$.
        \item  For every $\langle\delta,\{\},\ns\rangle \in AR$, 
        \begin{itemize}
            \item if it is attacked by $\langle\delta^\ms,\pi^\ms,\ms\rangle \in AR$, $\langle\delta^\ms,\pi^\ms,\ms\rangle \in E$; and
            \item if it is attacked by $\langle\delta^\ps,\pi^\ps,\ps\rangle \in AR$, $\langle\delta^\ps,\pi^\ps,\ps\rangle \notin E$.
        \end{itemize}
    \end{enumerate}
\end{corollary}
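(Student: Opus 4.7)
The plan is to prove this corollary by mirroring the proof of Proposition \ref{prop:extension}, systematically swapping the roles of $\ps$ and $\ms$ throughout, and invoking Corollary \ref{corol:3} (the forbidden-case structural analogue of Proposition \ref{prop:spec-prop}) wherever Proposition \ref{prop:spec-prop} was used. The ingredients remain the same: well-foundedness of DSA-framework (Proposition \ref{prop:gen-prop} part 2), uniqueness of the extension (so $E$ is simultaneously grounded, stable, admissible, and complete), and the ``change derivation state'' clause of the attack definition, which forbids an argument from attacking another of the same state.

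For part 1, I would assume for contradiction that some $\langle\delta,\{\},\ms\rangle \in AR \setminus E$. Since $E$ is stable, some element of $E$ attacks it; Corollary \ref{corol:3} part 1 rules out an $\ns$-attacker and the attack definition rules out an $\ms$-attacker, so the attacker has the form $\langle\delta',\pi',\ps\rangle \in E$. By Corollary \ref{corol:3} part 3, this $\ps$-argument is itself attacked (again, necessarily by an $\ms$-argument, which cannot lie in $E$ by conflict-freeness), and stability then forces a new $\ps$-argument in $E$ attacking that $\ms$. Iterating, I obtain an infinite descending chain $\ms \leftarrow \ps_1 \leftarrow \ms_1 \leftarrow \ps_2 \leftarrow \ms_2 \leftarrow \cdots$ of attacks, which contradicts well-foundedness; equivalently, the terminating leaf of this chain must be an unattacked $\ps$-argument in $E$, directly contradicting Corollary \ref{corol:3} part 3. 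Part 2 is the exact dual: assuming $\langle\delta,\{\},\ps\rangle \in E$, Corollary \ref{corol:3} part 3 forces an $\ms$-attacker outside $E$, admissibility forces a $\ps$-defender inside $E$, and the same iteration yields a contradiction with well-foundedness. Part 3 decomposes by attacker type: case (a) with an $\ms$-attacker reduces to the argument pattern of part 1, and case (b) with a $\ps$-attacker reduces to the pattern of part 2.

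The main obstacle is the chain-termination step. Corollary \ref{corol:3} only guarantees that $\ps$-arguments are attacked, without specifying a sign for the attacker, so the argument must combine Corollary \ref{corol:3} parts 1 and 3 with the ``change derivation state'' attack clause to pin the attacker down as an $\ms$-argument before iterating. Once this is done, well-foundedness (Proposition \ref{prop:gen-prop} part 2) immediately closes the induction, since any descending chain of attackers in DSA-framework must bottom out at an unattacked argument whose derivation state contradicts Corollary \ref{corol:3} part 3. Beyond that, the proof is routine symmetric bookkeeping with respect to the proof of Proposition \ref{prop:extension}.
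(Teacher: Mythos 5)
Your proposal is correct and is exactly the argument the paper intends: the corollary is left as the $\ps$/$\ms$-dual of Proposition \ref{prop:extension}, and you reproduce that proof with Corollary \ref{corol:3} substituted for Proposition \ref{prop:spec-prop}, including the necessary (and only implicit in the paper) step of pinning down each attacker's derivation state via the change-of-state clause together with the fact that $\ns$-arguments never attack. No gaps.
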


\begin{proposition}
\label{prop:exp-exist}
    If a consequence $\psi$ is locally optimally obligatory by a constraint hierarchy $H$ with a situation $\Pi$, there is an explanation for why $\psi$ is obligatory by $H$ with the situation $\Pi$.
\end{proposition}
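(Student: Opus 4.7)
The plan is to reduce the existence of the required explanations to structural properties of the unique extension of the DSA-framework. Since the framework is well-founded by Proposition \ref{prop:gen-prop}, its grounded, stable, preferred, and complete extensions all coincide in a single set $E$. By Proposition \ref{prop:extension}, every argument of the form $\langle\delta,\{\},\ps\rangle$ lies in $E$, every argument of the form $\langle\delta,\{\},\ms\rangle$ lies outside $E$, every $\ps$-argument that attacks some $\langle\delta'',\{\},\ns\rangle$ lies in $E$, and every $\ms$-argument that attacks such an $\ns$-argument lies outside $E$. Thus each argument required by the definition of explanation is classified by whether it belongs to $E$, and it suffices to show (a) every argument in $E$ admits an admissible dispute tree, and (b) every argument outside $E$ admits a maximal dispute tree.

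For (a), I would construct the admissible dispute tree rooted at $[P:x_0]$ with $x_0 \in E$ recursively. For each attacker $y$ of the current proponent node, add $[O:y]$; since $E$ is admissible it defends $x_0$, so there exists some $z \in E$ with $z$ attacking $y$, and one such $z$ is chosen as the child $[P:z]$. Well-foundedness (Proposition \ref{prop:gen-prop}, item 2) guarantees termination, hence every opponent node has a child, giving admissibility condition (a). For condition (b) of admissibility, the construction maintains the invariant that every proponent argument is in $E$ and every opponent argument is outside $E$, so no argument can appear at both statuses.

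For (b), construct the maximal dispute tree rooted at $[P:x_0]$ for $x_0 \notin E$ by exhaustively expanding: every attacker of a proponent node becomes an opponent child, and every opponent node that has at least one attacker is given some proponent child that attacks it. Well-foundedness ensures that this expansion terminates, and by construction any opponent leaf has no attackers, yielding maximality.

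The main obstacle is the admissibility case, specifically ensuring that the proponent's defensive choices never collide with opponent arguments already used. This is exactly where Proposition \ref{prop:extension} is indispensable: without the clean $\ps$-versus-$\ms$ separation between arguments in $E$ and arguments outside $E$ (together with the handling of $\ns$-arguments through their attackers), one could not guarantee that the recursively chosen proponent defenders $z \in E$ are disjoint from the opponent arguments, which all lie outside $E$. The $\ns$-arguments must be treated with some care, because although Proposition \ref{prop:spec-prop} rules them out as attackers, they can still appear as opponent children in an admissible tree; Proposition \ref{prop:extension}, item 3 shows their attackers can be chosen consistently from $E$, which closes the argument.
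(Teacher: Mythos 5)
Your proposal is correct and follows essentially the same route as the paper: both reduce the claim to Proposition \ref{prop:extension}'s classification of the required root arguments as inside or outside the unique extension, and then invoke the existence of admissible dispute trees for arguments in the (grounded) extension and maximal dispute trees for arguments outside it --- the only difference being that you sketch these constructions explicitly (using well-foundedness and the defence property of $E$) where the paper delegates them to citations of Dung et al.\ and Cyras et al. One harmless inaccuracy in your closing remark: since Proposition \ref{prop:spec-prop} says $\ns$-arguments attack nothing, they cannot occur as opponent children at all (opponent nodes are by definition attackers of proponent nodes), so the special care you describe for them is not actually needed beyond their role as roots handled by item 3 of Proposition \ref{prop:extension}.
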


\begin{proof}
    If $\psi$ is obligatory by $H$ with a situation $\Pi$ and $\psi$ is local optimal, every argument $\langle\delta, \{\}, \ps\rangle$ and every argument $\langle\delta', \pi, \ps\rangle$ that attacks $\langle\delta'', \{\}, \ns\rangle$ are inside the extension of DSA-framework with respect to $H$, $\Pi$, and $\psi$ and every argument $\langle\delta, \{\}, \ms\rangle$ and every argument $\langle\delta', \pi, \ms\rangle$ that attacks $\langle\delta'', \{\}, \ns\rangle$ are outside the extension, according to Proposition \ref{prop:extension}. It is proved that there is an admissible dispute tree for every argument inside the extension and a maximal dispute tree for every argument outside the extension (see \cite{cyras2016explanation,dung2007computing}). 
\end{proof}

\begin{corollary}
\label{corol:exp-exist}
    If a consequence $\psi$ is locally optimally forbidden by a constraint hierarchy $H$ with a situation $\Pi$, there is an explanation for why $\psi$ is forbidden by $H$ with the situation $\Pi$.
\end{corollary}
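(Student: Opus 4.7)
The plan is to mirror the proof of Proposition \ref{prop:exp-exist}, swapping the roles of the derivation states $\ps$ and $\ms$, since the definition of explanation for the forbidden case is obtained from the obligatory case by exactly this swap, and all the supporting machinery (Proposition \ref{prop:gen-prop}, Proposition \ref{prop:spec-prop} together with Corollary \ref{corol:3}, and Proposition \ref{prop:extension} together with its corollary) has already been established symmetrically.

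First, I would unfold the definition of explanation for the forbidden case: it requires an admissible dispute tree for each argument $\langle\delta,\{\},\ms\rangle$ and each $\langle\delta',\pi,\ms\rangle$ that attacks some $\langle\delta'',\{\},\ns\rangle$, and a maximal dispute tree for each $\langle\delta,\{\},\ps\rangle$ and each $\langle\delta',\pi,\ps\rangle$ that attacks some $\langle\delta'',\{\},\ns\rangle$. Second, I would invoke the corollary to Proposition \ref{prop:extension} (stated for locally optimally forbidden consequences) to conclude that every argument in the first group lies inside the unique extension $E$ of the DSA-framework, while every argument in the second group lies outside $E$.

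Third, I would appeal to the standard correspondence from \cite{cyras2016explanation,dung2007computing}: in a well-founded argumentation framework, every argument that belongs to the grounded extension admits an admissible dispute tree, and every argument that does not belong to it admits a maximal dispute tree. Since DSA-framework is well-founded by Proposition \ref{prop:gen-prop} (property 2), its grounded, stable, preferred and complete extensions all coincide with $E$, so this correspondence applies directly and yields the required dispute trees for both groups of arguments.

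I do not expect a genuine obstacle: the only thing to be careful about is that the corollary to Proposition \ref{prop:extension} really does provide the $\ms$/$\ps$ membership information in the exact form needed by the forbidden-case explanation definition, and that the cited existence result from \cite{cyras2016explanation,dung2007computing} is applied to the (unique) extension of a well-founded framework. Both checks are routine, so the corollary follows by the same argument as Proposition \ref{prop:exp-exist} with $\ps$ and $\ms$ interchanged.
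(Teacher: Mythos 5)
Your proposal is correct and matches the paper's intent exactly: the paper leaves this corollary without an explicit proof, relying on precisely the symmetric argument you describe — mirror Proposition \ref{prop:exp-exist} with $\ps$ and $\ms$ interchanged, use the corollary to Proposition \ref{prop:extension} for membership in the unique extension, and invoke the dispute-tree existence results of \cite{cyras2016explanation,dung2007computing}. No gaps.
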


\section{Discussion and Future Work}
\label{sec:discussion}

In section~\ref{sec:framework}, we present an algorithm to find DS-arguments within the derivation state space. However, finding DS-arguments does not require exploring the entire space. Instead, we only need to find maximal sub-bases that are consistent with the current situational knowledge. The problem of finding such sub-bases is known as Partial MAX-SAT (PMSAT) \cite{cha1997local}. PMSAT is a generalization of MAX-SAT problem \cite{krentel1986complexity} and decision versions of both problems are NP-complete \cite{fu2006solving}. Several PMSAT solvers have been developed to address this computational challenge \cite{el2013taxonomy,fu2006solving}. Following recent research~\cite{hosobe2024soft} that explored norms as general constraint hierarchies, the problem in that setting would be more challenging. This is because general constraint hierarchies consider error functions that returns progressively larger values as satisfaction decreases~\cite{borning1992constraint}. This allows degrees of satisfaction rather than true or false, making the formalization of sub-bases more difficult than ours. This highlights extending DSA-framework to handle general constraint hierarchies, along with other representations of norms, as one interesting future work.

In section~\ref{sec:explanation}, we prove that if one consequence is locally optimally obligatory or forbidden, there is always an explanation for why it is. Unfortunately, the converse is not true. That is, if there is an explanation for why one consequence is obligatory or forbidden, it does not guarantee that the consequence is actually obligatory or forbidden. This behavior can arise due to conflicts between norms. Example~\ref{ex:2} demonstrates one type of conflict where two norms within the same level have opposing enforcements on the same consequence. 

\begin{example}
\label{ex:2}
    Considering the constraint hierarchy $H = \langle\{p \rightarrow \neg r, q \rightarrow r\}\rangle$ and the situation $\Pi = \{p,q\}$.
\end{example}

    There are four sub-bases of $H$:  
    \begin{enumerate*}[label=(\alph*)]
        \item $\delta_0 = \langle\{p \rightarrow \neg r, q \rightarrow r\}\rangle = H$
        \item 
        $\delta_1 = \langle\{p \rightarrow \neg r\}\rangle$
        \item 
        $\delta_2 = \langle\{q \rightarrow r\}\rangle$
        \item $\delta_3 = \langle\{\}\rangle$
    \end{enumerate*}
    and under the local preference: $\delta_0 > \delta_1 > \delta_3$ and $\delta_0 > \delta_2 > \delta_3$. We have that $\Delta^\Pi = \{\delta_1,\delta_2,\delta_3\}$ because $\delta_0$ is not consistent with $\Pi$. We also have that $r$ is neither obligatory nor forbidden because $\delta_1,\delta_2  \in max(\Delta^\Pi)$, $\delta_2 \cup \Pi \vdash \neg r$ and $\delta_3 \cup \Pi \vdash r$. the DSA-framework with respect to $H$, the situation $\Pi = \{p,q\}$, and a consequence $r$ can be illustrated in Figure \ref{fig:dsa-ex-2}.

    \begin{figure}[ht]
    \centering
    \includegraphics[scale=0.5]{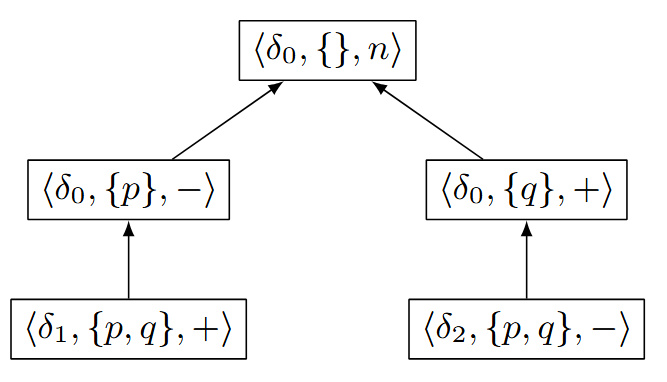}
    \caption{DSA-framework from Example \ref{ex:2}\\~\\~}
    \label{fig:dsa-ex-2}
    \end{figure}

    There is an explanation for why $r$ is obligatory (Figure \ref{fig:ex-ex-2} left) as well as  an explanation for why $r$ is forbidden (Figure \ref{fig:ex-ex-2} right). However, $r$ is neither obligatory nor forbidden as we have seen. 

    \begin{figure}[ht]
    \centering
        \includegraphics[scale=0.5]{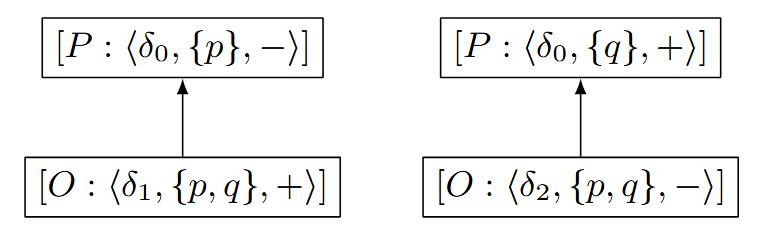}
    \caption{Explanations from Example \ref{ex:2}\\~\\~}
    \label{fig:ex-ex-2}
    \end{figure}

While this paper demonstrates the ability to explain why a consequence is obligatory, explaining why it is \emph{not} obligatory remains a challenge. This is because non-obligatory consequences can arise from either intentional permissions within the norms themselves or conflicts between norms. This highlights leveraging DSA-framework to automatically detect norm conflicts and explain non-obligatory consequences as another interesting future work.

\section{Conclusion}
\label{sec:conclusion}

This paper proposes the derivation state argumentation framework (DSA-framework) for explaining preemption in soft-constraint based norms represented as logical constraint hierarchies. The framework utilizes arguments that incorporate derivation states and the evolving knowledge of a situation.  Under the local optimality, this approach guarantees explanations for why certain consequences are obligatory or forbidden, based on the properties of arguments within the DSA-framework and its extensions. Future research directions include leveraging DSA-framework to explain non-obligatory consequences, automatically detect norm conflicts, and extend its applicability to handle general constraint hierarchies and other normative representations.


\begin{ack}
This work was supported by JSPS KAKENHI Grant Number,
JP22H00543, JST, AIP Trilateral AI Research, Grant Number, JPMJCR20G4,
and the MEXT "R\&D Hub Aimed at Ensuring Transparency and Reliability
of Generative AI Models" project.
\end{ack}



\end{document}